
\documentclass[10pt,journal,compsoc]{IEEEtran}
%

\usepackage{listings}
\usepackage{amsmath}
\usepackage{algorithmic}
\usepackage{algorithm}
\usepackage{xspace}

\newcommand{\method}{$A^3W$\xspace}
\usepackage{graphicx}
\usepackage{booktabs}
\usepackage{xurl}
\usepackage{xcolor}
\usepackage{amsfonts,amssymb,amsthm}
\newtheorem{lemma}{Lemma}[section]
\newtheorem{theorem}{Theorem}[section]

%

%
\ifCLASSOPTIONcompsoc
  \usepackage[nocompress]{cite}
\else
  \usepackage{cite}
\fi
%

%
\ifCLASSINFOpdf
\else
\fi
\hyphenation{op-tical net-works semi-conduc-tor}

\begin{document}
%
\title{A Language Anchor-Guided Method for Robust Noisy Domain Generalization}
%
%
%
%

\author{Zilin~Dai,
        Lehong~Wang,
        Fangzhou~Lin,
        Yidong~Wang,
        Zhigang~Li,
        Kazunori~D~Yamada,
        Ziming~Zhang,
        and~Wang~Lu*
\IEEEcompsocitemizethanks{
\IEEEcompsocthanksitem Z.Dai, F. Lin, Z. Zhang is with Worcester Polytechnic Institute, Worcester, MA, 01890. 
(E-mail: zdai2@wpi.edu; flin2@wpi.edu; zzhang15@wpi.edu)
\IEEEcompsocthanksitem L.Wang is with Carnegie Mellon University, Pittsburgh, PA, 15213. 
(E-mail: lehongw@andrew.cmu.edu)
\IEEEcompsocthanksitem Y.Wang is with Peking University, Beijing, China, 100871. 
(E-mail: yidongwang37@gmail.com)
\IEEEcompsocthanksitem Z.Li, W.Lu is with Tsinghua University, Beijing, China, 100190. 
(E-mail: aaalizhigang@163.com, newlw230630@gmail.com)
\IEEEcompsocthanksitem K.Yamada is with Tohoku University, Sendai, Japan, 980-8572. 
(E-mail: yamada@tohoku.ac.jp)
\IEEEcompsocthanksitem Wang Lu is the corresponding author.
}
}

%
%

\markboth{Journal of \LaTeX\ Class Files,~Vol.~14, No.~8, August~2015}%
{Shell \MakeLowercase{\textit{et al.}}: Bare Demo of IEEEtran.cls for Computer Society Journals}
%



\IEEEtitleabstractindextext{%
\begin{abstract}
Real-world machine learning applications are often hindered by two critical challenges: distribution shift and label noise. 
Networks inherently tend to overfit to redundant, uninformative features present in the training distribution, which undermines their ability to generalize effectively to the target domain's distribution. 
The presence of noisy data further exacerbates this issue by inducing additional overfitting to noise, causing existing domain generalization methods to fail in effectively distinguishing invariant features from spurious ones. 
To address these challenges, we propose \textbf{A}nchor \textbf{A}lignment and \textbf{A}daptive \textbf{W}eighting (\method), a novel algorithm based on sample reweighting guided by natural language processing (NLP) anchors that seeks to extract representative features. 
In particular, \method leverages semantic representations derived from natural language models to serve as a source of domain-invariant prior knowledge. 
We also introduce a weighted loss function that dynamically adjusts the contribution of each sample based on its distance to the corresponding NLP anchor, thereby improving the model’s resilience to noisy labels. 
Extensive experiments on benchmark datasets demonstrate that \method outperforms state-of-the-art domain generalization methods, yielding significant improvements in both accuracy and robustness across various datasets and noise levels.
\end{abstract}

\begin{IEEEkeywords}
Domain Generalization, Noisy Label Learning, Representation Learning, Multimodal Learning, Knowledge Integration
\end{IEEEkeywords}}

\maketitle

\IEEEdisplaynontitleabstractindextext

%
\IEEEpeerreviewmaketitle

\IEEEraisesectionheading{\section{Introduction}\label{sec:introduction}}

%
%
%
%
\IEEEPARstart{D}{omain} Generalization (DG) has emerged as a pivotal algorithm in machine learning, aiming to develop models that can maintain high performance on previously unseen environments—or \emph{domains}. Traditional methods often assume that training and test data share the same distribution, yet in real-world scenarios, there is frequently a substantial shift between these distributions. This phenomenon, widely referred to as \emph{domain shift}, can cause severe performance degradation in tasks spanning computer vision, natural language processing, and medical image analysis \cite{wang2021survey}. As shown in Figure~\ref{fig:domain_shift1}(a)(b), even within the same class label, the distribution of feature representations can vary considerably. This variation may stem from differences in image acquisition conditions—such as lighting variations, changes in pose, or complex background environments—and even from more subtle domain-specific factors like sensor noise or camera calibration differences. Such intra-class variability poses a significant challenge for developing accurate and adaptable models, which must learn to extract invariant features that capture the true semantic essence of the class while ignoring irrelevant variations. In response, DG task has been proposed to enable robust model behavior without further fine-tuning, thereby facilitating broader applicability in diverse domains \cite{Muandet2013ICML, Li2018AAAI}. Over time, researchers have explored various ways to align or unify source and target domains, including strategies that learn \emph{domain-invariant} features through 
multi-task autoencoders, augmented architecture, or adversarial alignment \cite{Ghifary2015ICCV, GaninICML2015, Volpi2018NIPS}. 

\begin{figure*}[!t]
\centering
\includegraphics[width=1\linewidth]{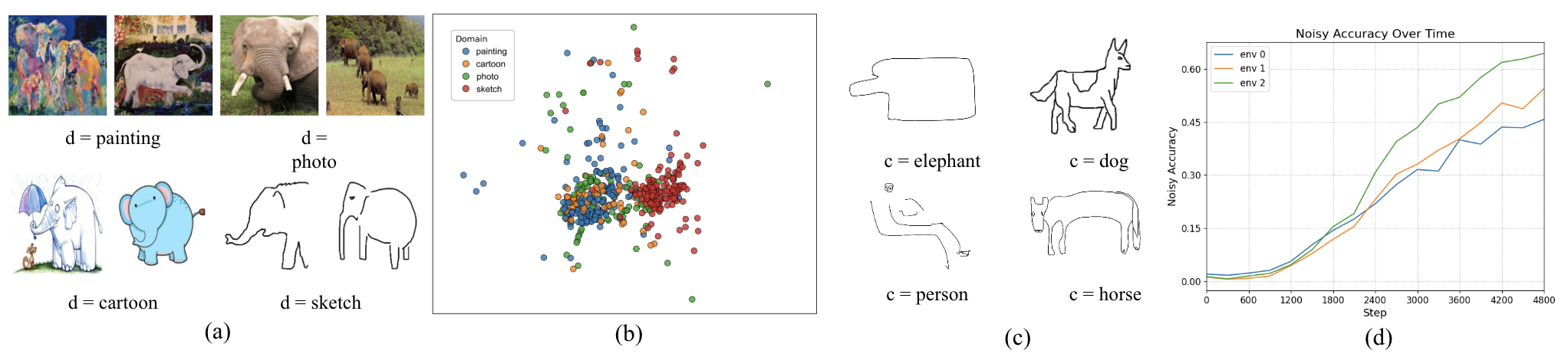}
\vspace{-20pt}
\caption{
Illustration of domain shift using four distinct domains—painting, photo, cartoon, and sketch—for the same object class. In (a), the visual appearance of “elephant” varies substantially across domains, underscoring significant style discrepancies. In (b), the t-SNE projection shows that even for the same class, the distribution of features differs across domains, highlighting the inherent challenges of domain generalization. In (c), unclear or mislabeled samples introduces additional noise, further exacerbating the difficulty of achieving robust generalization. In (d), we show the accuracy of a network trained on noisy data: ideally, the model should resist overfitting to noise, but the graph indicates a steady increase in noise accuracy over time, suggesting progressive overfitting to noisy labels. 
}
\label{fig:domain_shift1}
\vspace{-10pt}
\end{figure*}

One persistent obstacle in DG is the prevalence of \emph{spurious correlations} in deep neural networks, where models inadvertently link target labels to irrelevant, domain-specific features~\cite{arjovsky2019invariant, qiao2024understanding}. For instance, a classifier tasked with recognizing animals might rely on particular backgrounds for prediction, causing it to misclassify animals photographed in unfamiliar settings~\cite{beery2018recognition}. Such spurious correlations frequently arise because modern networks have the capacity to overfit even low-level noise, thereby anchoring their decisions on non-causal cues~\cite{rahman2024towards}. To address these pitfalls, researchers have proposed a variety of techniques, from causal inference algorithms like Invariant Risk Minimization (IRM)~\cite{arjovsky2019invariant} to methods specifically designed to disentangle spurious and invariant features~\cite{wang2024disentangle}. For example, Wu et al.~\cite{wu2023discover} introduced the Discover and Cure (DISC) strategy, which detects suspicious features and systematically mitigates their impact by leveraging interpretable domain knowledge. 
The development of methods rooted in causal representation learning also arises to mitigate the reliance on spurious correlations~\cite{lv2022causality}.
Despite these advances, achieving robust generalization under significant domain shift and considerable label noise persists as an open research challenge that necessitates more comprehensive methods and better theoretical insights.

Existing domain generalization (DG) approaches have predominantly focused on aligning feature distributions or disentangling spurious factors using data-driven methods~\cite{wang2021survey}. While these techniques have paved the way for significant advances, they often encounter considerable difficulties when label noise is prevalent or when data lack distinct domain-invariant cues~\cite{Nigam2020ICDIS}, leading to rapid degradation in network performance as noise intensity increases. This challenge is further exacerbated in large, overparameterized networks that are prone to memorizing spurious correlations, thereby amplifying errors. Indeed, estimates suggest that 8\% to 38.5\% of real-world data may suffer from label corruption, undermining the generalizability of models that fail to distinguish genuine signals from incidental patterns \cite{Song2022survey}. Although some DG algorithms inherently exhibit robustness to label noise, in general cases they lack performance superiority compared to empirical risk minimization (ERM) baseline \cite{qiao2024understanding}. Consequently, understanding how to boost robustness against both domain shifts and noisy labels remains a pressing goal in building truly reliable artificial intelligence systems.

In light of these challenges, we consider an alternative approach that focuses on introducing novel sources of features from diverse perspectives to guide the learning pipeline. An emerging line of research has investigated the role of \emph{external knowledge}, which can guide models toward more semantically grounded representations~\cite{dash2022review}. Some studies incorporate domain knowledge directly into the learning pipeline, for instance by embedding specialized scientific insights into feature extraction or loss design~\cite{von2021informed}. In other cases, researchers have shown that domain knowledge can significantly enhance model interpretability and robustness—for example, in laser-induced breakdown spectroscopy, where integrated expert knowledge improved quantification performance~\cite{song2022incorporating}. Another promising direction involves multi-modal alignment, in which semantic text descriptions serve as stable anchors that help networks transcend purely visual or sensor-driven biases~\cite{liu2023tdg}. Models like CLIP~\cite{radford2021learning} exemplify this principle by projecting both text and images into a shared embedding space, improving generalization across dissimilar environments. Yet critical barriers remain regarding how to ensure that the guidance remains valid across diverse domains and how to structure knowledge encoding effectively to align with modern deep learning architectures~\cite{dash2022review}.

Inspired by this recent trend of knowledge integration, we propose a new algorithm called \textbf{A}nchor \textbf{A}lignment and \textbf{A}daptive \textbf{W}eighting (\textbf{\method}), which integrates external knowledge---specifically, linguistic cues derived from large-scale language models---into the DG pipeline. Our central insight is that \emph{NLP anchors}, such as text embeddings from CLIP or analogous language models, provide \emph{domain-invariant} references capable of steering learned representations away from spurious and noisy cues. By aligning intermediate feature spaces with these carefully chosen textual anchors, \method encourages semantic consistency while limiting the influence of mislabeled or outlier samples. Moreover, we introduce a \emph{weighted loss function} that dynamically modulates each sample’s impact based on its proximity to an anchor, thereby enhancing robustness to label noise. We validate our method through extensive experiments on multiple DG benchmarks, confirming that \method surpasses existing approaches in terms of both accuracy and resilience to shifting and noisy data. Our contributions can be summarized as follows: 
\begin{itemize}
    \item We propose an iterative update algorithm that unifies external semantic knowledge and image-based features, enabling more robust and interpretable generalization across unseen domains.
    \item We introduce the concept of NLP anchors derived from large-scale language models (e.g., CLIP), which provide domain-invariant and semantically rich feature constraints that significantly improve model robustness. Furthermore, we propose a novel \emph{weighted loss function} that dynamically adjusts each sample's contribution based on its distance to the corresponding NLP anchor. By assigning higher importance to samples that are closer to the learned semantic representations and lower importance to outliers, our approach effectively enhances feature alignment, ultimately leading to improved generalization across unseen domains.
    \item We conduct extensive experiments on multiple domain generalization benchmarks to demonstrate that \method outperforms state-of-the-art methods in terms of accuracy, robustness, and adaptability.
\end{itemize}
 

%
%

\section{Related Work}
\subsection{Domain Generalization}
Domain generalization (DG) addresses the challenge of training models that can perform well on previously unseen domains, without access to labeled examples from those domains at training time. A significant challenge in DG is the presence of \emph{spurious correlations} in deep neural networks, which hinder generalization across diverse settings~\cite{arjovsky2019invariant}.
These correlations arise when models inadvertently rely on domain-specific artifacts—features that are incidentally correlated with the target labels in the training data—rather than on the truly invariant properties that are essential for robust performance. Consequently, when models encounter data from an unseen domain, their overreliance on these spurious cues leads to performance degradation when deployed in unseen domains. Early methods focused on learning domain-invariant representations through shallow or deep feature alignments, aiming to eliminate domain-specific information while preserving task-relevant features. For instance, Muandet et al.~\cite{Muandet2013ICML} propose Domain-Invariant Component Analysis (DICA), a kernel-based method to learn invariant features by minimizing the discrepancy across source domains. Ghifary et al.~\cite{Ghifary2015ICCV} use multi-task autoencoders to learn generic representations. 

More recent works resort to data augmentation or adversarial strategies to synthesize novel training distributions, thereby exposing models to a richer variety of samples. Volpi et al.~\cite{Volpi2018NIPS} introduce adversarial data augmentation by generating worst-case perturbations to expose the model to a broader set of variations during training, thereby improving its ability to handle unseen shifts. Ganin and Lempitsky~\cite{GaninICML2015} introduced domain-adversarial training using gradient reversal layers to align feature distributions. Li et al.~\cite{Li2018AAAI} explored meta-learning strategies to enhance DG by simulating domain shifts during training. Shankar et al.~\cite{shankar2018generalizing} proposed a generalization method using domain-specific perturbations to improve robustness. Another class of augmentation-based methods applies Mixup strategies, interpolating data points across domains to improve robustness~\cite{yan2020improve}. Style transfer methods have also been introduced to augment training data with diverse visual styles, improving out-of-distribution performance~\cite{nam2021diverse}. Carlucci et al.~\cite{carlucci2019jigsaw} proposed a self-supervised approach that solves jigsaw puzzles to learn more generalized features.

Beyond augmentation, several methods have turned their focus to regularization to improve generalization. Balaji et al.~\cite{balaji2018metareg} introduced a regularization algorithm by learning a regularizer modeling the objective of DG that a model trained on one domain to generalize effectively. Dou et al.~\cite{dou2019episodic} utilized episodic training to simulate domain shifts and improve generalization. Zhou et al.~\cite{zhou2020ensemble} introduced a regularization approach based on domain-adaptive ensemble learning to enhance a model's ability to generalize across different domains.
Furthermore, recent research has revisited the issue of spurious correlations in DG, proposing methods to mitigate their impact. Li et al.~\cite{li2024spurious} explored building a structural causal model for representation learning to address spurious correlations. Ma et al.~\cite{ma2024federated} introduced FedCD, a federated domain generalization algorithm that employs a spurious correlation intervener for self-supervised feature intervention and a risk extrapolation aggregation strategy to reduce reliance on misleading shortcuts and boost performance on unseen domains. These studies underscore the importance of addressing spurious correlations to enhance DG.

\subsection{Learning under noisy labels}
Learning under noisy labels addresses the challenge of training models when annotated data contain errors or inconsistencies. Such noise often arises in real-world scenarios due to human annotation mistakes, weak labeling processes, or automated data collection pipelines. When training with noisy labels, deep neural networks can overfit to incorrect annotations, resulting in degraded performance and reduced robustness~\cite{Song2022survey}. Initial methods focused on designing robust loss functions and label-correction strategies to mitigate the impact of label noise. For example, Reed et al.~\cite{reed2014training} introduced a “bootstrapping” approach that combines model predictions with the (potentially noisy) labels to guide learning. Goldberger and Ben-Reuven~\cite{goldberger2017training} proposed adding a noise adaptation layer to model the corruption process directly. Veit et al.~\cite{veit2017learning} leveraged a small clean dataset to estimate noise statistics and correct the loss for mislabeled examples. These methods laid the groundwork for more advanced noisy-label handling techniques.

Subsequent research explored “co-teaching” strategies to further reduce the impact of noisy annotations. Han et al.~\cite{han2018co} proposed a co-teaching algorithm wherein two networks train simultaneously, exchanging likely clean samples to avoid memorizing label noise. MentorNet~\cite{jiang2018mentor} introduced a curriculum-based approach, using a “mentor” network to select reliable samples for the “student” network, progressively ignoring data points suspected to be noisy. A complementary direction has focused on iterative label refinement. Tanaka et al.~\cite{tanaka2018joint} presented a joint optimization algorithm that updates labels alongside network parameters, gradually reducing noise in the dataset. To improve robustness further, recent efforts incorporate data augmentation strategies specifically tailored for noisy labels. For instance, Nishi et al.~\cite{nishi2021augmentation} systematically studied augmentation techniques to enhance the network’s tolerance to corrupted annotations.

Recent studies delve deeper into noisy-label learning in conjunction with large-scale or real-world datasets. DivideMix~\cite{li2020dividemix} treated noisy-label learning as a semi-supervised problem, splitting data dynamically into clean and noisy sets for more targeted training. Song et al.~\cite{Song2022survey} provided a comprehensive survey on deep learning with noisy labels, highlighting emerging trends such as instance-dependent noise modeling and robust early-learning regularization. These approaches emphasize balancing label correction, sample selection, and model regularization. Robust learning under noisy labels has become increasingly important for domain generalization tasks, as noisy annotations can exacerbate spurious correlations and hinder out-of-domain performance. Consequently, integrating advanced noisy-label handling into DG pipelines remains an active area of research. Some recent research has begun to address the intersection of label noise and domain shifts. Qiao et al.~\cite{qiao2024understanding} indicate that while label noise can sometimes serve as a form of regularization, in scenarios with significant domain shifts it may reinforce spurious correlations, ultimately degrading performance on unseen domains. These findings underscore the need for integrated approaches that jointly mitigate label noise and domain shifts to achieve robust out-of-distribution generalization.

\section{Method}
\subsection{Problem Formulation}
Let $\mathcal{D}_S$ denote a set of source domains, and $\mathcal{D}_T$ denote the target domain. We assume that $\mathcal{D}_S$ and $\mathcal{D}_T$ share the input space $\mathcal{X}$ and label space $\mathcal{Y}$, in which each domain $d \in \{1, \dots, \mathcal{D}\}$ has data drawn from a joint distribution $P_d(X, Y)$ where $X \in \mathcal{X}$  and $Y \in \mathcal{Y}$. In practice, the observed labels are corrupted by noise, i.e. for each sample with true label $y$, we observe a noisy label $\tilde{y}$ generated by: 
\[
\tilde{y} =
\begin{cases}
y, & \text{with probability } 1-p, \\
\tilde{y}' \sim Q(\cdot \mid y), & \text{with probability } p,
\end{cases}
\]
where $p$ is the probability that the label is corrupted, and $Q(\tilde{y} \mid y)$ is a conditional distribution over possible noisy labels given $y$. For a classification problem with $C$ classes with $c \in \{1, \dots, C \}$, this means that the true label is flipped to a different class with probability $p$. 

\textit{Noisy Domain Generalization:} The proposed method aims to generalize to unseen domains while preventing overfitting to noisy data. In this setting, the training dataset \(\mathcal{S}_{tr}\) is contaminated with label noise. We assume that the testing dataset \(\mathcal{S}_{te}\) is drawn from a clean distribution with labels \(\mathcal{Y}_{te}\). Our objective is to train a model on the noisy training dataset \(\mathcal{S}_{tr}\) that learns robust and domain-invariant representations, thereby minimizing the classification error on the clean target domain \(\mathcal{D}_{te}\).

\subsection{Motivation} 
\textbf{Domain generalization is a challenging problem primarily due to the tendency of networks to learn spurious correlations between features and labels.} Such correlations can cause overfitting, even in tasks where the network achieves high performance on the training data. Observations from several works in DG highlighted the challenge for standard networks to learn representations that are robust or invariant enough to generalize well to unseen domains. The DomainNet ~\cite{peng2019domainet} benchmark revealed that many current models struggle to transfer knowledge effectively across heterogeneous data sources. ~\cite{gulrajani2021insearch} showed that numerous DG algorithms fail to significantly outperform a simple empirical risk minimization (ERM) baseline. ~\cite{salaudeen2022TCRI} demonstrated that incomplete constraints can lead to suboptimal generalization performance because the network may not fully disentangle invariant from spurious features. On the other hand, Ben-David \cite{ben2010theory} et al. in their work proved that while many methods can enforce domain invariance on the training domains, this invariance sometimes comes at the expense of losing the discriminative power of the features in unseen domains.

\textbf{Natural occurrences of noise in datasets further exacerbated network overfitting.} Noise can arise from a variety of sources, such as sensor inaccuracies, annotation errors, and environmental variations. Figure~\ref{fig:domain_shift1}(c) presents examples of real-world noise. The figure displays sample images from the PACS ~\cite{li2017deeper} dataset, where some images are either incorrectly annotated or are ambiguous and unclear even to the human eye. As of current DG algorithms, ~\cite{qiao2024understanding} has proved that DG algorithms inherently provide label-noise robustness; however, our observations indicate that performance still drops significantly as noise levels increase. Figure~\ref{fig:domain_shift1}(d) illustrates the impact of noise on network fitting. Ideally, a network trained on noisy data would exhibit relatively stable accuracy throughout training iterations, indicating resistance to overfitting. However, the sharp increase in accuracy across all domains as training progresses suggests that the network is overfitting to the noise. This observation led us to rethink an alternative way to provide more guidance to the feature extraction process for learning better representation. Inspired by works on leveraging external knowledge for classification to bridge representation from different modalities, we propose to harness the power of semantic embeddings to provide systematic guidance during training. In our approach, we seek to mitigate the impact of data that exhibit significant noise or deviate from the primary training distribution. By incorporating semantic guidance, our aim is to weight samples based on their semantic consistency with predefined class prototypes or anchors. Formally, a semantic anchor $ \mathbf{a_c} \in \mathbb{R}^m $ is computed for each class $ c $, and the goal is to encourage the feature extractor $ f: \mathcal{X} \rightarrow \mathbb{R}^m $ to map an input $ \mathbf{x} $ to a representation $ f(\mathbf{x}) $ that aligns with the corresponding anchor $ \mathbf{a_{\tilde{y}}} $ even in the presence of noise.

\subsection{Algorithm for \method}
To address the challenges above, the algorithm of \method is designed to learn a robust feature representation and classifier that generalizes well to the unseen target domain \(\mathcal{D}_T\), despite the presence of noisy labels and distribution shifts across domains. At its core, \method aims to align the projected features extracted from input images with fixed semantic anchors that act as dependable sources of knowledge. Furthermore, rather than employing a binary selection mechanism which can discard valuable information, our approach uses a continuous weighting scheme that assigns each sample an importance value, effectively preserving the gradient contributions from all samples and enabling smoother, more adaptive optimization. The overall architecture is shown in Figure~\ref{fig:architecture}, where we first obtain the natural language processing (NLP) anchors, and then use them to iteratively guide the updates for the featurizer, classifier, and projection layers. These stages ensure that the model learns domain-invariant representations while maintaining stable optimization. The process mainly consists of three steps, with steps 2-3 being iterative:

 \textit{1) NLP anchor setting phase}: we start by computing semantic anchors using a pretrained language-vision model (i.e., CLIP) as text encoder. For each class \(c \), a text prompt is generated using a template, which are tokenized and encoded by the CLIP model. The anchors \(\mathbf{\tilde{a}_c}\) are then stacked into an anchor matrix and are used to initialize a set of linear projectors \(\{Proj_c\}_{c=1}^{C}\) that map image features into the same semantic space.

 \textit{2) Main Model Update}: we subsequently update the featurizer and classifier to improve classification performance using the combined loss of the weighted alignment and cross-entropy. 
During \textbf{inference}, the exponential moving average (EMA) network, initialized as a deep copy of the primary network, is employed to produce predictions, as it generally yields more stable and robust outputs compared to the primary network.

 \textit{3) Mapping Layer Optimization}: Each class is assigned a trainable mapping layer $Proj_c$, which projects feature embeddings into the NLP anchor space. The mapping layers are updated iteratively using our weighted loss function.


\begin{figure}[!t]
\centering
\includegraphics[width=\linewidth]{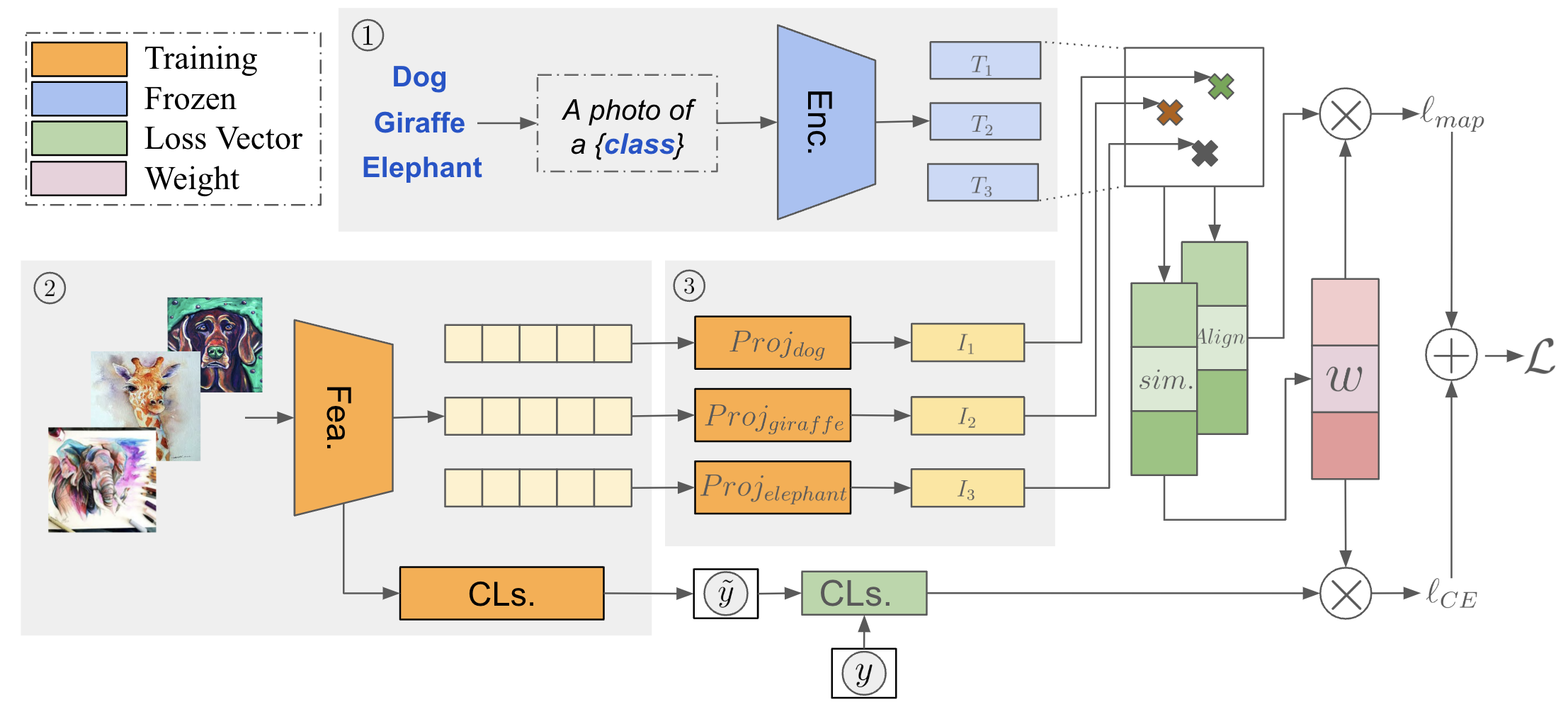}
\vspace{-10pt}
\caption{
Architecture of \method. This diagram illustrates the end-to-end workflow and key components of \method. The encoder (Enc.) converts input text into embeddings, which are then refined by the featurizer (Fea.) into a more informative representation. The classifier (CLs.) leverages these refined features for prediction. Additionally, the similarity module (sim.) computes the cosine similarity between the embedding anchor and the projected features, while the alignment module (align.) creates deep copies of this similarity for weight (\(w\)) computation.
}
\vspace{-10pt}
\label{fig:architecture}
\end{figure}

\subsection{Obtaining NLP Anchors via CLIP Convention}
In our method, the text encoder $ T(\cdot) $ is based on a transformer architecture, as used in the CLIP model. For a given class $c$, we first construct a text prompt by appending a fixed template to the class name, and then encode this prompt using a pretrained text encoder $T(\cdot)$ to obtain an unnormalized anchor $\mathbf{a_c}$. To ensure that the anchor lies on the unit hypersphere, we normalize it, yielding $\mathbf{\tilde{a}_c}$. The formulation is given by:

\begin{equation}
\begin{aligned}
Prompt(c) &= \texttt{"a photo of a } c \texttt{"} \\
a_c &= T\big(Prompt(c)\big) \\
\end{aligned}
\end{equation}

The process begins by converting a given text prompt $ Prompt(c) $ into a sequence of discrete tokens using a tokenizer. Let
$
t = [t_1, t_2, \dots, t_L]
$
denote the resulting token sequence, where $ L $ is the length of the sequence. Each token $ t_i $ is then mapped to a continuous vector via an embedding matrix $ \mathbf{E} \in \mathbb{R}^{V \times m} $, where $ V $ is the vocabulary size and $ d $ is the embedding dimension:
\begin{equation}
\mathbf{e_i} = \mathbf{E}(t_i), \quad \text{for } i = 1, \dots, L.
\end{equation}
These token embeddings are augmented with positional encodings ($\mathbf{p}$) to incorporate the order of the tokens, resulting in a sequence of enriched embeddings:
\begin{equation}
\mathbf{\tilde{e}_i} = \mathbf{e_i} + \mathbf{p_i}, \quad \text{for } i = 1, \dots, L,
\end{equation}
where $ p_i $ is the positional encoding for the $ i $-th token. The sequence $ \{\tilde{e}_1, \tilde{e}_2, \dots, \tilde{e}_L\} $ is then fed into a transformer encoder, which consists of multiple layers of self-attention and feedforward networks. This produces a set of contextualized token representations:
\begin{equation}
\mathbf{H} = \mathrm{Transformer}\big([\mathbf{\tilde{e}_1}, \mathbf{\tilde{e}_2}, \dots, \mathbf{\tilde{e}_L]\big)} = [\mathbf{h_1}, \mathbf{h_2}, \dots, \mathbf{h_L}].
\end{equation}
Typically, a special token (such as $[EOS]$ or $[CLS]$) is appended to the input sequence, and its corresponding output $ h_c $ is used as the aggregate representation of the entire text prompt. Finally, a learned linear projection ($W_{proj}$) is applied to obtain the final text embedding:
\begin{equation}
\mathbf{a_c} = \mathbf{W_{\text{proj}}} \, \mathbf{h_c},
\end{equation}
which is then normalized to ensure that it lies on the unit hypersphere:
\begin{equation}
\mathbf{\tilde{a}_c} = \frac{\mathbf{a_c}}{\|\mathbf{a_c}\|}.
\end{equation}
which serves as the NLP anchor for class $ c $, providing a robust semantic reference that guides the image feature extraction process.

\subsection{Warm-Up Training}
The warm-up training phase occupies the first 10\% of the total training steps and is designed to stabilize the network before more complex loss terms, such as the alignment loss, are introduced. During this phase, the featurizer \(f\) and classifier \(g\) is updated with cross-entropy loss $\ell_{CE}$. At each training step, a minibatch of samples is obtained from the dataset. The inputs \(\mathbf{x_i}\) are first concatenated and passed through the feature extractor \(f(\cdot)\), implemented using a ResNet-based architecture, to generate latent features. These features are then fed into the classifier \(g(\cdot)\) to produce predictions. The classifier is optimized using stochastic gradient descent by minimizing the cross-entropy loss defined as

\begin{equation}
\mathcal{L}_{\text{warm-up}} = \sum_{i=1}^{N} \ell_{CE}\Big(g\big(f(\mathbf{x_i})\big), y_i\Big),
\end{equation}

where \(y_i\) are the true labels and \(N\) is the number of samples in the minibatch. 
In parallel, the moving average network is initialized as a deep copy of the primary network and is updated iteratively to stabilize learning. Specifically, after a predefined number of iterations, the EMA network parameters \(\boldsymbol{\theta}_{\text{ema}}\) are updated as follows:
\begin{equation}
\boldsymbol{\theta}_{\text{ema}} \leftarrow \frac{\boldsymbol{\theta}_{\text{ema}} \cdot \text{ema\_count} + \boldsymbol{\theta}}{\text{ema\_count} + 1},
\label{eq:ema-update}
\end{equation}
where \(\boldsymbol{\theta}\) are the parameters of the primary network and \(\text{ema\_count}\) tracks the number of updates. After each update, the EMA network is refreshed according to the update rule described above.

\subsection{Weighted Loss}

Let \(\{(\mathbf{x}_i, y_i)\}_{i=1}^N\) denote a batch of \(N\) training samples, where \(\mathbf{x}_i\) is the \(i\)th input feature vector and \(y_i \in \{1,2,\dots,C\}\) is its corresponding class label for a total of \(C\) classes. We denote by \(Proj_{y_i}(\cdot)\) the mapping layer corresponding to label \(y_i\) that projects feature embeddings into a semantic space aligned with NLP-derived anchors, and by \(\lambda\) a hyperparameter balancing the alignment loss and the classification loss.

\textit{Alignment Loss:}  
One way to enforce that the features align with the corresponding NLP anchor is to minimize the negative cosine similarity loss between the mapped feature representation \(Proj_{y_i}(f(\mathbf{x}_i))\) and the fixed NLP anchor \(\mathbf{a}_{y_i}\). This loss is defined as
\begin{equation}
\mathcal{L}_{\text{anchor}}(\mathbf{x}_i) = - \frac{Proj_{y_i}(f(\mathbf{x}_i)) \cdot \mathbf{a}_{y_i}}{\|Proj_{y_i}(f(\mathbf{x}_i))\| \, \|\mathbf{a}_{y_i}\|}.
\label{eq:align-loss}
\end{equation}
By minimizing \(\mathcal{L}_{\text{anchor}}\), we effectively maximizes the cosine similarity between the projected features and the NLP anchor. This process ensures that the learned features are directionally aligned with the semantic anchors, ultimately promoting representations that are both semantically meaningful and domain-invariant.


\textit{Weight Computation:}  
To prioritize high-confidence training instances, we assign importance weights to samples based on their similarity to the NLP anchors. We introduce a temperature parameter \(\tau\) to adjust the sharpness of the softmax weighting distribution—higher values of \(\tau\) yield a more peaked distribution that emphasizes strongly aligned samples, whereas lower values produce a softer weighting. Specifically, we first compute the softmax weights over the alignment costs:
\begin{equation}
w_i \;=\; \frac{\exp\Bigl(-\,\tau\,L_i\Bigr)}{\sum_{j=1}^N \exp\Bigl(-\,\tau\,L_j\Bigr)},
\end{equation}
where 
\(L_i = -\cos\Bigl(Proj_{y_i}(f(\mathbf{x}_i)),\,\mathbf{a}_{y_i}\Bigr)\).
Rewriting this, we obtain:
\begin{equation}
w_i \;=\; \frac{\exp\Bigl(\tau\,\cos\bigl(Proj_{y_i}(f(\mathbf{x}_i)),\,\mathbf{a}_{y_i}\bigr)\Bigr)}{
\sum_{j=1}^N \exp\Bigl(\tau\,\cos\bigl(Proj_{y_j}(f(\mathbf{x}_j)),\,\mathbf{a}_{y_j}\bigr)\Bigr)}.
\label{eq:softmax-weight}
\end{equation}

\textit{Overall Loss:}  
To maintain classification accuracy while leveraging the selective sampling, we incorporate a weighted cross-entropy loss, where samples with higher \(w_i\) values contribute more significantly to the overall loss. The final loss function combines the alignment loss and the weighted cross-entropy loss:
\begin{equation}
\resizebox{.89\hsize}{!}{$
\mathcal{L} \;=\; \lambda \sum_{i=1}^N w_i \Bigl[-\,\cos\bigl(Proj_{y_i}(f(\mathbf{x}_i)),\,\mathbf{a}_{y_i}\bigr)\Bigr] \;+\; \sum_{i=1}^N w_i \,\ell_{CE}\!\Bigl(g\bigl(f(\mathbf{x}_i)\bigr),\,y_i\Bigr)
$}
\label{eq:final-loss}
\end{equation}

The overall loss function serves a dual purpose. The alignment loss minimizes the negative cosine similarity between the projected features and their corresponding NLP anchors, compelling the model to learn representations that are semantically consistent with robust class prototypes. This semantic alignment acts as a natural filter for noise—noisy or out-of-distribution samples tend to have representations that poorly align with their class anchors. Consequently, when the cross-entropy loss minimizes classification error, it focuses primarily on high-confidence samples that are well-aligned. Thus, by jointly minimizing both alignment and classification errors, the loss function not only ensures accurate predictions but also inherently suppresses the adverse influence of noisy data.

Reformulating the equation above, we obtain
\begin{equation}
\resizebox{.89\hsize}{!}{$
\mathcal{L} \;=\; \sum_{i=1}^N w_i \Biggl[ \lambda\,\Bigl(-\cos\bigl(Proj_{y_i}(f(\mathbf{x}_i)),\,\mathbf{a}_{y_i}\bigr)\Bigr) \;+\; \ell_{CE}\Bigl(g\bigl(f(\mathbf{x}_i)\bigr),\,y_i\Bigr) \Biggr]
$}
\label{eq:final-loss-rewrite}
\end{equation}

From this formulation, the weight term \(w_i\) acts as an importance factor that regulates the influence of each sample on the overall loss. This regularization mechanism ensures that both the alignment loss and the classification loss emphasize reliable, high-confidence data. By down-weighting the contribution of samples that are likely noisy or out-of-distribution, the weight term effectively minimizes the adverse impact of noise during training. Consequently, this leads to more robust feature learning and improved classification accuracy.

\subsection{Iterative Update}
After the warm-up, the full \method optimization procedure begins. We alternate between updating 1) the featurizer and classifier to improve classification performance; 2) the mapping layers $ M_{y_i} $ to enhance alignment with NLP anchors. The classifier and featurizer are updated in every step, but the mapping layers are only updated at specific intervals to prevent overfitting and to maintain stability. The mapping layer updates occur when:
    \begin{equation}
    \resizebox{0.8\hsize}{!}{$
    \left( \frac{\text{step}}{\text{steps\_per\_epoch}} \right) \mod \left( \frac{\text{n\_steps}}{\text{steps\_per\_epoch} \times 10} \right) = 0.
    $}
    \end{equation}
which ensures that the mapping layers are updated once every 10\% of the total training steps. 
The update loss uses:
    \begin{equation}
    \mathcal{L} = \lambda \sum_{i=1}^{N} \mathbf{w_i} L_i + \sum_{i=1}^{N} \mathbf{w_i} \ell_{CE}(g(f(\mathbf{x_i})), y_i),
    \end{equation}
    where $ w_i $ are softmax-scaled importance weights presented above.
The training procedure is summarized in Algorithm~\ref{alg:method}. Each training step consists of sampling a mini-batch from the dataset, computing both the alignment loss and weighted cross-entropy loss, and updating either the mapping layers or the featurizer and classifier based on the step schedule. The algorithm continues until convergence or until the predefined number of steps is reached. 

\begin{algorithm}[h]
\caption{Training Outline for \method}
\label{alg:method}
\begin{algorithmic}[1]
\REQUIRE Dataset \(\mathcal{D}\) with classes \(\{1, \ldots, C\}\), hyperparameters \(\lambda, \tau, \ldots\)
\STATE \textbf{Initialize:} featurizer \(f(\cdot)\), classifier \(g(\cdot)\), empty mapping layers \(\{Proj_1, \ldots, Proj_C\}\)
\STATE \textbf{Set NLP anchors:} \(\{\mathbf{a_1}, \ldots, \mathbf{a_C}\}\) via CLIP (Algorithm invokes \lstinline|set_nlp_anchor|)
\STATE \textbf{Warm-up Training:}
\FOR{10\% of steps}
    \STATE Sample mini-batch \(\{(\mathbf{x_i},y_i)\}\) from \(\mathcal{D}\)
    \STATE Update parameters with \(\mathcal{L}_{\text{warm-up}}\)
\ENDFOR
\STATE \textbf{Main Training:}
\FOR{step = 1 to n\_steps}
    \STATE Sample mini-batch \(\{(\mathbf{x_i},y_i)\}\) from \(\mathcal{D}\)
    \IF{\textit{condition for maplayer update is met}}
        \STATE Update mapping layers with \(\mathcal{L}\)
    \ELSE
        \STATE Update featurizer and classifier and layers with \(\mathcal{L}\)
    \ENDIF
\ENDFOR
\end{algorithmic}
\end{algorithm}

\subsection{Theoretical Insights}
\label{subsec:theory}
Our algorithm is designed to mitigate the adverse effects of label noise and domain shifts by integrating several key components, as formalized by our previously defined equations. Our goal is to learn a predictor \(h\colon \mathcal{X} \to \mathcal{Y}\) that minimizes the error on \(\mathcal{D}_T\). Define the \emph{expected error} of \(h\) on domain \(\mathcal{D}_i\) as
\begin{equation}
\label{eq:error-def}
\epsilon_{P_d}(h) \;=\; \mathbb{E}_{(x,y)\sim P_d}\bigl[\mathbf{1}\{h(x)\neq y\}\bigr],
\end{equation}
where \(P_d\) denotes the joint probability distribution over the input space \(\mathcal{X}\) and label space \(\mathcal{Y}\) for domain \(\mathcal{D}_i\), capturing both the inherent variability of the data and the process by which labels may be corrupted (i.e., \(y\) may be flipped to \(\tilde{y}\) with probability \(p\)). 
When \(p\) is high, standard empirical risk minimization can easily overfit to spurious correlations in the noisy labels. To counteract this, the alignment loss (see Eq.~\eqref{eq:align-loss}) serves as a semantic prior that constrains the feature extractor \(f\) to produce representations that lie close to the fixed semantic anchors. This regularizing effect of the alignment loss can be formalized as follows:

\begin{lemma}[Semantic Prior Restricts Hypothesis Space]
\label{lemma:semantic}
Suppose \(\mathcal{H}\) is the space of predictors induced by \((f, \{Proj_c\})\). If \(\max_{x,c}\,\|\nabla \mathcal{L}_{\text{anchor}}(x,c)\|\leq \gamma\), then \(\mathcal{H}\) excludes functions whose representations deviate from the anchors by more than a constant factor related to \(\gamma\). In particular, spurious correlations that push the embeddings away from these anchors become suboptimal.
\end{lemma}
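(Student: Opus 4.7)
The plan is to proceed in three stages. First, I would unpack what $\mathcal{H}$ means operationally: it is the set of pairs $(f, \{Proj_c\})$ that arise as (approximate) stationary points of the training objective in Eq.~\eqref{eq:final-loss}. Because $\mathcal{L}_{\text{anchor}}$ appears as one of the summands, any $h \in \mathcal{H}$ must satisfy a first-order optimality condition with respect to its alignment component. Combining this with the hypothesis $\max_{x,c}\|\nabla \mathcal{L}_{\text{anchor}}(x,c)\| \le \gamma$ yields a quantitative constraint: moving from such an $h$ along any unit direction in parameter space cannot decrease $\mathcal{L}_{\text{anchor}}$ by more than $\gamma$, so $\mathcal{L}_{\text{anchor}}$ at $h$ is within $O(\gamma)$ of its minimum over the reachable region.

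Second, I would translate this into a geometric bound on the representations. Writing $u_c(x) = Proj_c(f(x))/\|Proj_c(f(x))\|$, a direct computation of the gradient of the cosine similarity in Eq.~\eqref{eq:align-loss} shows that
\begin{equation}
\nabla_{u} \bigl[-\cos(u, \mathbf{a}_c)\bigr] \;=\; -\,\frac{1}{\|Proj_c(f(x))\|}\Bigl(\mathbf{a}_c - (u \cdot \mathbf{a}_c)\,u\Bigr).
\end{equation}
The norm of the right-hand side equals $\sin\theta / \|Proj_c(f(x))\|$, where $\theta$ is the angle between $u_c(x)$ and $\mathbf{a}_c$. Assuming a mild lower bound on $\|Proj_c(f(x))\|$ (reasonable since the projectors are trained), the hypothesis $\|\nabla\mathcal{L}_{\text{anchor}}\| \le \gamma$ forces $\sin\theta \le C\gamma$ for a constant $C$. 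Equivalently, the projected feature sits inside a spherical cap around $\mathbf{a}_c$ of radius $O(\gamma)$, which is precisely the ``constant factor related to $\gamma$'' bound in the lemma statement.

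Third, I would address the ``spurious correlations are suboptimal'' clause. The NLP anchors $\mathbf{a}_c$, by construction, encode purely semantic, language-derived class information and are invariant across visual domains. A predictor relying on a domain-specific spurious attribute would induce a feature map whose direction varies with that attribute rather than tracking $\mathbf{a}_c$; the second-stage bound then shows that such a map has alignment loss strictly above the $O(\gamma)$ threshold and hence lies outside $\mathcal{H}$. I would formalize this by decomposing $f(x) = f_{\text{inv}}(x) + f_{\text{sp}}(x)$ into invariant and spurious components and arguing that any $Proj_c$ with nontrivial weight on $f_{\text{sp}}$ pushes $u_c(x)$ outside the cap for some $x$.

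The main obstacle is this last step, because the statement ``spurious features push embeddings away from anchors'' is not purely formal without assumptions on how spurious features are distributed relative to the anchor directions. My plan would be to sidestep full generality by either (i) assuming that the distribution of spurious components is, in expectation, orthogonal to the anchor subspace, or (ii) stating the result as a conditional claim: \emph{any} $h \in \mathcal{H}$ whose features incur alignment cost above $O(\gamma)$ is excluded, and noting that predictors dominated by spurious cues fall into this category by the standard DG arguments cited earlier in the paper. The cosine-gradient calculation is the only genuinely technical piece; the rest is packaging.
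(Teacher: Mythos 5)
Your route is genuinely different from, and considerably more explicit than, the paper's own proof, which consists of two sentences: it writes \(\nabla \mathcal{L}_{\text{anchor}}(x,c) = -\nabla \cos\bigl(Proj_c(f(x)),\mathbf{a}_c\bigr)\) and then asserts that a uniformly bounded gradient implies local changes of \(f(x)\) away from \(\mathbf{a}_c\) incur non-negligible cost, hence poorly aligned hypotheses are penalized and the feasible region of \(\mathcal{H}\) shrinks. There is no spherical-cap bound, no optimality condition, and no formal treatment of the spurious-correlation clause. Your middle step --- computing that the gradient of \(-\cos(u,\mathbf{a}_c)\) with respect to the projected feature has norm \(\sin\theta/\|Proj_c(f(x))\|\), so the hypothesis \(\|\nabla\mathcal{L}_{\text{anchor}}\|\le\gamma\) pins \(\sin\theta \le C\gamma\) once the projection norm is bounded below --- is the coherent way to extract quantitative content from the stated assumption, and it quietly repairs a logical wrinkle in the paper's wording: an \emph{upper} bound on the gradient cannot by itself make deviation costly (that would require a lower bound on loss growth); what it can do, as you use it, is force the representations at which the bound holds to lie near the anchors. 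Your version buys an explicit geometric conclusion and an honest list of the auxiliary hypotheses it needs; the paper's version buys only brevity.

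Two caveats on your sketch. First, \(\sin\theta \le C\gamma\) is also satisfied near \(\theta=\pi\), so the conclusion ``inside a spherical cap around \(\mathbf{a}_c\)'' needs the antipodal cap excluded, e.g.\ by observing that the anti-aligned configuration is a maximum of the alignment loss (unstable under training) or by assuming \(\cos\theta>0\) after warm-up. Second, your stage-one inference --- that stationarity of the full objective plus the gradient bound puts \(\mathcal{L}_{\text{anchor}}\) within \(O(\gamma)\) of its minimum over the reachable region --- does not follow from a Lipschitz-type bound alone without a bounded-diameter or convexity assumption; fortunately it is also unnecessary, since your stage-two pointwise reading of \(\max_{x,c}\|\nabla\mathcal{L}_{\text{anchor}}(x,c)\|\le\gamma\) already delivers the lemma's conclusion directly. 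Your stage three does require distributional assumptions on the spurious component that the paper never states, but the paper's own proof does not establish that clause rigorously either, so you are not missing anything the authors actually supplied.
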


\begin{proof}[Proof]
By definition,
\begin{equation}
\nabla \mathcal{L}_{\text{anchor}}(x,c)
=
-\nabla \cos\Bigl(Proj_c(f(x)),\mathbf{a_c}\Bigr).
\end{equation}
A uniformly bounded gradient implies that local changes in \(f(x)\) away from \(\mathbf{a}_c\) incur non-negligible costs. Hence, any hypothesis that aligns poorly with anchors sees a high penalty, effectively restricting the feasible region of \(\mathcal{H}\). 
\end{proof}

Lemma~\ref{lemma:semantic} shows that anchor alignment behaves like a regularizer, steering the network away from memorizing noise-laden features. To further enhance robustness, we employ a continuous weighting scheme (Eq.~\eqref{eq:softmax-weight}) that assigns higher importance to samples with strong semantic alignment. This reweighting mechanism effectively adjusts the empirical risk, as captured in our overall loss function (Eq.~\eqref{eq:final-loss}), so that samples likely to be correctly labeled have a greater influence during training.

\begin{theorem}[Robustness under Weighted ERM]
\label{thm:weighted}
Let the learned hypothesis \(h\colon \mathcal{X} \to \mathcal{Y}\) be defined as 
\begin{equation}
h(x)=g(f(x)),
\end{equation}
where \(f\) is the feature extractor and \(g\) is the classifier. Suppose that a fraction \(\alpha\) of the training samples in domain \(\mathcal{D}_i\) are corrupted. Then, if the temperature parameter \(\tau\) is sufficiently large, the softmax weights \(w_i\) computed via Eq.~\eqref{eq:softmax-weight} will concentrate on the uncorrupted samples. This concentration effectively reweights the empirical distribution to approximate a clean distribution, such that the overall risk via Eq. ~\eqref{eq:final-loss} approximates the risk on noise-free data. Consequently, the learned hypothesis \(h\) is less prone to overfitting to label noise and spurious correlations.
\end{theorem}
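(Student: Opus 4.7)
The plan is to partition the mini-batch into clean and corrupted subsets, establish a positive margin between the alignment costs of the two subsets, and then exploit the exponential sharpening of the softmax to show that the weighted empirical loss converges to the loss evaluated only on the clean subset. From that point the argument reduces to a standard generalization bound on noise-free data, giving the stated risk approximation.

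First, I would write the mini-batch as $S = S_{\text{clean}} \cup S_{\text{noisy}}$ with $|S_{\text{noisy}}|/N = \alpha$. For $i \in S_{\text{clean}}$ the observed label equals the true class, and Lemma~\ref{lemma:semantic} drives the projected embedding $Proj_{y_i}(f(\mathbf{x}_i))$ toward the matching anchor $\mathbf{a}_{y_i}$, so $L_i = -\cos(\cdot)$ is close to $-1$. For $i \in S_{\text{noisy}}$ the anchor $\mathbf{a}_{y_i}$ is \emph{not} the semantically correct one, so under the mild assumption that distinct anchors are separated on the unit hypersphere by some $\delta>0$ inherited from the CLIP text encoder, $L_i$ stays bounded away from $-1$. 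This produces a margin $\Delta > 0$ between the typical clean cost and noisy cost.

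Second, I would plug these two regimes into Eq.~\eqref{eq:softmax-weight}. Lower bounding the normalizer by its clean contribution and upper bounding the noisy numerator yields $\sum_{i \in S_{\text{noisy}}} w_i \le \tfrac{\alpha}{1-\alpha}\, e^{-\tau\Delta}$, so the aggregate softmax mass on corrupted samples decays exponentially in $\tau$ and $\sum_{i \in S_{\text{clean}}} w_i \to 1$. Substituting into Eq.~\eqref{eq:final-loss}, and using that both the per-sample cross-entropy and cosine terms are uniformly bounded by some constant $B$, the discrepancy between the weighted empirical risk and the clean-only empirical risk is at most $B\alpha(1-\alpha)^{-1} e^{-\tau\Delta}$, which vanishes for large $\tau$. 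A standard Hoeffding- or Rademacher-type concentration then relates the clean empirical risk to the population error $\epsilon_{P_d}(h)$ defined in Eq.~\eqref{eq:error-def}, yielding a bound of the form $\epsilon_{P_d}(h) \le \epsilon^{\text{clean}}_{P_d}(h) + O(e^{-\tau\Delta})$, which is precisely the sense in which the learned $h = g \circ f$ becomes less prone to overfitting label noise.

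The hard part is justifying the margin $\Delta$ in a non-vacuous way. The argument rests on two ingredients that must be explicitly assumed: (i) the quantitative anchor-separation property $\min_{c\neq c'}\langle \mathbf{a}_c, \mathbf{a}_{c'}\rangle \le 1-\delta$ from the CLIP text encoder, and (ii) a post-warm-up clustering guarantee, derivable from Lemma~\ref{lemma:semantic}, ensuring that projected features cluster near their semantically correct anchors rather than near an arbitrary one. Without both, a corrupted sample could incidentally align as well as a clean one and the softmax would fail to concentrate. I would bake these as hypotheses into the theorem and propagate their constants through $\Delta$; any looseness in the anchor-separation lower bound directly degrades the exponential rate, so a tight treatment of $\delta$ is the main technical challenge.
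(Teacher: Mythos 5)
Your proposal follows the same core mechanism as the paper's own proof: a large $\tau$ exponentially amplifies the gap in alignment cost between clean and corrupted samples, so the softmax in Eq.~\eqref{eq:softmax-weight} pushes the weight mass onto the uncorrupted subset and the weighted objective in Eq.~\eqref{eq:final-loss} approximates training on clean data. The difference is one of rigor rather than route: the paper's argument is a two-sentence heuristic that simply asserts corrupted samples ``tend to'' have higher alignment cost and stops there, whereas you make explicit exactly what that assertion requires --- a positive margin $\Delta$ between clean and noisy alignment costs, which in turn rests on anchor separation $\delta$ on the unit hypersphere and a post-warm-up clustering guarantee tied to Lemma~\ref{lemma:semantic} --- and you then quantify the consequence with the $\frac{\alpha}{1-\alpha}e^{-\tau\Delta}$ bound on the corrupted weight mass, a bounded-loss comparison to the clean-only empirical risk, and a final concentration step linking that to $\epsilon_{P_d}(h)$ in Eq.~\eqref{eq:error-def}. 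In short, the step you correctly identify as the hard part (justifying $\Delta$) is precisely what the paper assumes without proof, so your version is a strict strengthening of the published argument; its cost is the extra hypotheses you must add to the theorem statement, which the paper's informal proof leaves implicit.
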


\begin{proof}[Proof]
When \(\tau\) is large, the exponential function in \eqref{eq:softmax-weight} amplifies differences in the alignment cost. If a corrupted sample \((x_i,\tilde{y}_i)\) has an incorrect label, the alignment cost \(\mathcal{L}_{\text{anchor}}(x_i,\tilde{y}_i)\) tends to be higher (poorer alignment). Hence, \(w_i\) becomes small. This effectively filters out mislabeled samples from dominating the training objective, approximating the scenario of training on mostly correct labels. 
\end{proof}
Theorem~\ref{thm:weighted} indicates that our weighting strategy can mitigate the detrimental effects of noise, enabling the model to approximate the true (clean) distribution more closely. These results not only demonstrate the efficacy of our reweighting strategy in handling noisy labels but also motivate a further examination of how our approach reduces the discrepancy between noisy and clean distributions.
Let \(P\) and \(Q\) be the distributions of the source and target domains, respectively, with label noise in \(P\). Many domain generalization results \cite{} rely on bounding a distributional divergence \(\mathrm{div}(P,Q)\). Noise can inflate this divergence by altering label proportions or feature-label mappings. However, semantic alignment and selective reweighting encourage the model to focus on consistent semantic cues, thereby lowering the \emph{effective} divergence to the clean distribution.

Formally, define the weighted empirical distribution
\begin{equation}
\widehat{P} \;=\; \sum_{i=1}^N w_i \delta_{(x_i,y_i)},
\end{equation}
where \(\delta\) denotes the Dirac measure and each sample \((x_i, y_i)\) is reweighted by its importance \(w_i\). In other words, instead of treating each sample equally as in standard empirical risk minimization, we solve:
\begin{equation}
\min_{h \in \mathcal{H}}
\; \mathbb{E}_{(x,y)\,\sim\,\widehat{P}}\bigl[\ell\bigl(h(x),y\bigr)\bigr]
\;=\;
\min_{h \in \mathcal{H}}
\;\sum_{i=1}^N w_i\,\ell\bigl(h(x_i),y_i\bigr).
\end{equation}
This reweighting mechanism effectively suppresses the influence of corrupted samples in the empirical risk minimization process. As a result, the reweighted distribution \(\widehat{P}\) becomes a closer approximation of the noise-free (clean) distribution. Formally, by minimizing the risk under \(\widehat{P}\), the learned hypothesis \(h\) is more likely to reflect the patterns present in the clean data rather than the spurious correlations induced by noise. In other words, by down-weighting noisy samples, our approach reduces the divergence between the empirical distribution \(\widehat{P}\) and the true clean distribution \(Q\), i.e., \(\mathrm{div}(\widehat{P}, Q)\) is reduced. This reduction in discrepancy implies that the model is trained on a distribution that better represents the underlying data, ultimately leading to improved generalization performance on the unseen target domain.

\begin{table*}[!h]
\centering
\caption{Dataset Information for Domain Generalization Benchmarks}
\label{tab:dataset_info}
\vspace{-5pt}
\resizebox{\textwidth}{!}{%
\begin{tabular}{l l c l c l}
\toprule
\textbf{Dataset} & \textbf{Domains} & \textbf{\# Classes} & \textbf{Class Descriptions} & \textbf{\# Images}  \\
\midrule
PACS & Photo, Art, Cartoon, Sketch & 7 & Dog, Elephant, Giraffe, Guitar, Horse, House, Person & 9,991  \\
VLCS & Caltech101, LabelMe, SUN09, VOC2007 & 5 & Bird, Car, Chair, Dog, Person & 10,729  \\
OfficeHome & Art, Clipart, Product, Real & 65 & Office/home objects & 15,500  \\
SVIRO & 4 Car Makes
  & 7 
  & Description of back seat 
  & 25,000  \\
DomainNet & Clipart, Infograph, Painting, Quickdraw, Real, Sketch & 15 & Common Objects & 25,730 \\
\bottomrule
\end{tabular}%
}
\end{table*}

\section{Experiments}

\subsection{Experimental Setup and Datasets Preprocessing} 

We conducted our experiments on a server with 10 NVIDIA Quadro RTX 6000 24G GPUs. We extensively evaluate our method on domain generalization with noisy data using four benchmark datasets. For more details, please refer to their original publications. \textbf{PACS} \cite{li2017deeper} is a 7-class image classification dataset that spans four distinct domains (Photo, Art Painting, Cartoon, and Sketch). Renowned for its diverse artistic styles, PACS offers a challenging testbed for robust representation learning. \textbf{VLCS} \cite{fang2015CVPR} comprises 5 classes drawn from four domains (Caltech101, LabelMe, SUN09, and VOC2007). Each domain originates from a different source, resulting in significant variability in image characteristics and presenting a broad generalization challenge with both natural and scene-centric images. \textbf{Office-Home} \cite{venkateswara2017deep} is a 65-class dataset designed to capture common objects in everyday office and home environments. Its wide range of object categories and the substantial variation in style and background make it a rigorous benchmark for domain generalization. \textbf{SVIRO} \cite{cruz2020sviro} is a synthetic dataset focused on vehicle interiors. It contains 25,000 images from 10 distinct vehicle interior environments and features 7 occupant classes. SVIRO provides a challenging scenario for domain generalization, especially in handling variations in interior design and occupancy. \textbf{DomainNet} ~\cite{peng2019domainet} is a large-scale dataset spanning six domains (Clipart, Infograph, Painting, Quickdraw, Real, Sketch) with 345 classes. It encompasses a broad range of styles and object categories, capturing significant distribution shifts across these domains. Due to computation resource limitation, we only used the first four domains in SVIRO and 15 classes per domain for DomainNet. More information is provided in ~\ref{tab:dataset_info}.

All experiments are conducted using DomainBed~\cite{gulrajani2021insearch} to ensure consistency across datasets. To assess the out-of-distribution performance of various algorithms, we inject instance-independent symmetric label noise based on \cite{qiao2024understanding} where we add 10\% and 25\% label noise. For all presented real-world datasets, we employ a ResNet-50~\cite{he2016deep} pretrained on ImageNet~\cite{deng2009imagenet}, and standard data augmentation techniques are applied across all experiments. For the domain-shift datasets, we used the same metric as \cite{qiao2024understanding}, in which we perform single-domain cross-test experiments by designating each domain in turn as the test set and using the remaining domains for training. In these experiments, we use 20\% of the test data for model selection, and no early stopping is applied. For each dataset, we run 3 independent trials; in each trial, we perform a hyperparameter search over 20 different configurations according to DomainBed's default settings. Each environment were made to be the target dataset independently, thus 240 total trials were performed for each dataset. Specifically, we train all models for 5000 steps on all real-world datasets.

\subsection{Baseline Methods}
We compare \method against a suite of representative \textbf{baseline methods} from the domain generalization literature~\cite{qiao2024understanding}:
\begin{itemize}
    \item \textit{ERM}: Empirical Risk Minimization, which trains a single model on the aggregated source domains~\cite{vapnik1992principles}.  
    \item \textit{Mixup}: Mixes pairs of source samples to create interpolated training examples, improving robustness~\cite{zhang2018mixup}.  
    \item \textit{GroupDRO}: Optimizes worst-group loss among source domains to handle distribution shifts~\cite{sagawa2020distributionally}.  
    \item \textit{IRM}: Invariant Risk Minimization, which enforces domain-invariant representations to improve out-of-distribution generalization~\cite{arjovsky2019invariant}.  
    \item \textit{V-REx}: Variance Risk Extrapolation, which penalizes variance in risk across different environments to improve generalization~\cite{krueger2021out}.  
\end{itemize}

We obtain the comparison results from \cite{qiao2024understanding}, and we use the Pytorch suite developed by ~\cite{gulrajani2021insearch} for base for algorithm implementation.  The subset of TerraIncognita from \cite{qiao2024understanding} was no longer publicly available, so we used the SVIRO dataset. The SVIRO dataset's baseline method result was obtained by running the experiment in the same setup as \cite{qiao2024understanding}. 
The target domain is divided into validation and test sets. The validation set is used to tune the hyperparameters by selecting the configuration that yields the highest accuracy. Using the optimal hyperparameter configuration, we then evaluate the model's performance on the test set, and the average classification accuracy on the target domain is used as the evaluation metric.

\begin{table}[!t]
\centering
\caption{Cross-test accuracy (\%) for domain shifts under a noise level of $\eta = $ \textit{0.25}. Best results in \textbf{bold}.}
\label{tab:main_results_0.25}
\vspace{-5pt}
\resizebox{\linewidth}{!}{%
\begin{tabular}{lccccc}
\toprule
\textbf{Method} & \textbf{PACS} & \textbf{VLCS} & \textbf{Office-Home} & \textbf{SVIRO} & \textbf{DomainNet} \\
\midrule
ERM        & $74.5 \pm 0.6$ & $71.9 \pm 0.6$ & $54.9 \pm 0.3$ & $77.9 \pm 0.9$ & $65.0 \pm 0.4$ \\
GroupDRO   & $74.7 \pm 0.4$ & $71.2 \pm 0.2$ & $54.1 \pm 0.3$ & $75.4 \pm 0.5$ & $68.4 \pm 0.2$\\
IRM        & $71.0 \pm 1.9$ & $70.3 \pm 0.3$ & $53.9 \pm 1.8$ & $62.7 \pm 14.7$ & $37.6 \pm 15.5$ \\
VREx       & $73.5 \pm 0.7$ & $71.8 \pm 0.6$ & $53.0 \pm 0.9$ & $73.6 \pm 2.6$ & $64.9 \pm 0.8$\\
Mixup      & $75.2 \pm 0.9$ & $71.9 \pm 0.5$ & $57.5 \pm 0.3$ & $83.3 \pm 1.5$ & $67.9 \pm 0.2$\\
\midrule
\textbf{\method (ours)} & \textbf{$\mathbf{82.1} \pm \mathbf{0.5}$} & \textbf{$\mathbf{76.1} \pm \mathbf{0.3}$} & \textbf{$\mathbf{65.2} \pm \mathbf{0.2}$} & $\mathbf{93.9} \pm \mathbf{0.1}$ & $\mathbf{73.9} \pm \mathbf{0.2}$\\
\bottomrule
\end{tabular}%
}
\end{table}

\begin{table}[!t]
\centering
\caption{Cross-test accuracy (\%) for domain shifts under a noise level of $\eta = $ \textit{0.1}. Best results in \textbf{bold}.}
\label{tab:main_results_0.1}
\vspace{-5pt}
\resizebox{\linewidth}{!}{%
\begin{tabular}{lccccc}
\toprule
\textbf{Method} & \textbf{PACS} & \textbf{VLCS} & \textbf{Office-Home} & \textbf{SVIRO} & \textbf{DomainNet} \\
\midrule
ERM        & $82.0 \pm 0.5$ & $75.0 \pm 0.3$ & $62.2 \pm 0.1$ & $80.8 \pm 1.9$ & $70.8 \pm 0.5$ \\
GroupDRO   & $82.4 \pm 0.3$ & $75.1 \pm 0.1$ & $61.3 \pm 0.4$ & $78.6 \pm 2.1$ & $71.9 \pm 0.1$ \\
IRM        & $80.4 \pm 1.2$ & $74.6 \pm 0.3$ & $61.2 \pm 1.2$ & $72.5 \pm 8.1$ & $38.9 \pm 19.8$ \\
VREx       & $81.4 \pm 0.2$ & $75.0 \pm 0.1$ & $60.6 \pm 0.5$ & $81.6 \pm 3.5$ & $66.9 \pm 0.4$ \\
Mixup      & $83.6 \pm 0.1$ & $75.5 \pm 0.2$ & $63.9 \pm 0.1$ & $84.1 \pm 0.7$ & $69.2 \pm 0.6$ \\
\midrule
\textbf{\method (ours)} & \textbf{$\mathbf{85.2} \pm \mathbf{0.2}$} & \textbf{$\mathbf{78.5} \pm \mathbf{0.2}$} & \textbf{$\mathbf{68.2} \pm \mathbf{0.3}$} & $\mathbf{94.8} \pm \mathbf{0.3}$ & $\mathbf{76.1} \pm \mathbf{0.1}$ \\
\bottomrule
\end{tabular}%
}
\end{table}

\subsection{Experimental Results}
Table~\ref{tab:main_results_0.25} and~\ref{tab:main_results_0.1} present the average test accuracy for different values of $\eta = 0.1$ and $\eta = 0.25$, respectively. The experimental results show that \method consistently outperforms all baseline methods. We observe the following:

1) \method consistently achieves the highest accuracy across all domains in all datasets. Its margin of improvement often exceeds 6--8\% on average compared to the baseline methods, indicating stable training dynamics, leading to consistent accuracy improvements across multiple datasets.  

2) \method shows the strongest performance gain in SVIRO (averaged to 13\%), where the most semantic information is given, proving the importance of the NLP anchor in guiding the sampling process of learning the featurizer.  

3) Among the baselines, \textbf{Mixup} frequently attains the second-best performance, suggesting that interpolation-based augmentation can help mitigate moderate noise. However, its improvements are relatively small and are inconsistent across different domains.

4) In our noise analysis (see Figure~\ref{fig:noise_analysis}), we observe that the general training accuracy tends to drop earlier as noise levels increase, a sign of overfitting to noise. Among all evaluated methods, \method exhibits the smallest decline in accuracy with rising noise, demonstrating its resilience to overfitting under noisy conditions. Additionally, the flexibility of simple loss reweighting in handling noisy labels allows it to adapt well to various network architectures. Thus, in subsequent experiments, we use \method as the default approach unless otherwise specified.

\begin{figure}[!t]
\centering
\includegraphics[width=1\linewidth]{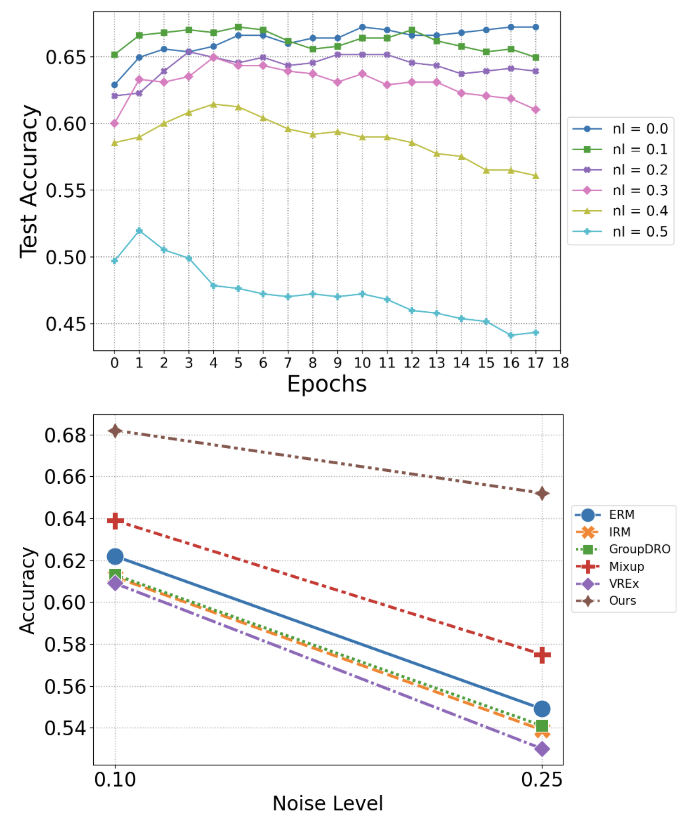}
\vspace{-20pt}
\caption{
Noise analysis. (a) The effect of increasing noise levels on classification accuracy reveals that higher noise leads to a sharper decline, reflecting an increased tendency to overfit. (b) \method is most robust to noise injection, with its accuracy decreasing by only 0.2 when noise increases from 0.1 to 0.25, in contrast to other algorithms, which show declines between 0.427 and 0.527.
}
\label{fig:noise_analysis}
\vspace{-10pt}
\end{figure}

\subsection{Ablation Study}
\label{sec:ablation}

To better understand the role of each component in \method, we conduct ablation studies under three key questions, as summarized in Table~\ref{tab:ablation}. The experiments span multiple datasets, and the results highlight how removing NLP anchor alignment or omitting the softmax weighting mechanism impacts overall performance.

\textit{1) Why does removing NLP anchor alignment reduce performance?}
In Table~\ref{tab:ablation}, discarding the NLP anchor alignment step (i.e., “w/out NLP anchor”) yields an average accuracy drop of about 0.86\% across the listed datasets. This decrease arises because anchor alignment provides semantic guidance that helps the model differentiate meaningful features from spurious correlations. Without alignment, the feature extractor lacks an external semantic reference, making it more prone to overfitting on noisy labels and domain-specific artifacts.

\textit{2) How does eliminating softmax weights affect training stability?}
When we replace the adaptive softmax weighting with uniform weights (i.e., “w/out weighted loss”), the accuracy declines by an average of 1.16\% across the datasets. This finding suggests that the continuous weighting scheme (Section~\ref{subsec:theory}) is crucial for emphasizing well-aligned (likely clean) samples. In contrast, uniform weighting fails to down-weight noisy or misaligned samples, reducing training stability and overall performance.

\textit{3) Are both alignment and weighting necessary for robust generalization?}
Finally, the baseline \method model—which integrates both NLP anchor alignment and softmax weighting—achieves the highest accuracy on average (e.g., 80.76\% on PACS). These results indicate that semantic anchors and adaptive weighting reinforce each other: alignment ensures meaningful feature extraction, while the weighting mechanism selectively highlights reliable data. Removing either component leads to noticeable performance degradation, underscoring their combined importance for robust domain generalization under noisy labels.

\begin{table}[!t]
\centering
\caption{Cross-test accuracy (\%) for domain shifts. Best results in \textbf{bold}.}
\label{tab:ablation}
\vspace{-5pt}
\resizebox{\linewidth}{!}{%
\begin{tabular}{lccccc}
\toprule
\textbf{Method} & \textbf{PACS} & \textbf{VLCS} & \textbf{Office-Home} & \textbf{SVIRO} & \textbf{DomainNet} \\
\midrule
w/out NLP anchor        & $80.7$ & $75.5$ & $65.1$ & $91.7$ & $85.5$ \\
w/out weighted loss     & $79.7$ & $74.2$ & $64.7$ & $93.6$ & $85.8$ \\
\midrule
\textbf{\method (baseline)} & \textbf{$\mathbf{82.1}$} & \textbf{$\mathbf{76.1}$} & \textbf{$\mathbf{65.2}$} & \textbf{$\mathbf{93.9}$} & \textbf{$\mathbf{86.5}$}\\
\bottomrule
\end{tabular}
}
\end{table}

These findings confirm that semantic alignment with textual anchors (and the associated softmax weighting) plays a pivotal role in boosting generalization. The additional MA network contributes incremental stability but is not solely responsible for the performance gains.

\begin{figure}[!h]
\centering
\vspace{-5pt}
\includegraphics[width=1\linewidth]{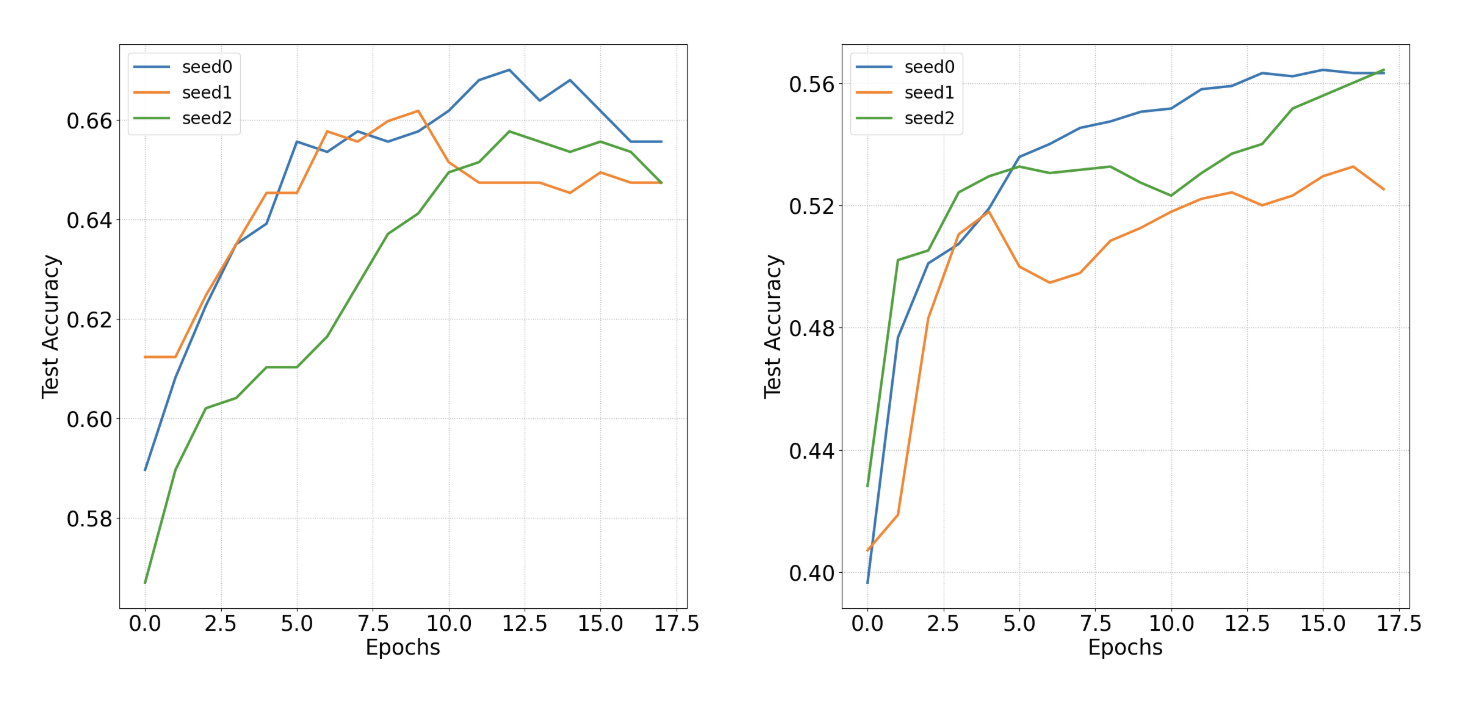}
\vspace{-20pt}
\caption{
 Convergence trajectories under three different random seeds and training configurations for two datasets. 
}
\label{fig:convergence}
\vspace{-10pt}
\end{figure}

\begin{figure}[!t]
\centering
\includegraphics[width=\linewidth]{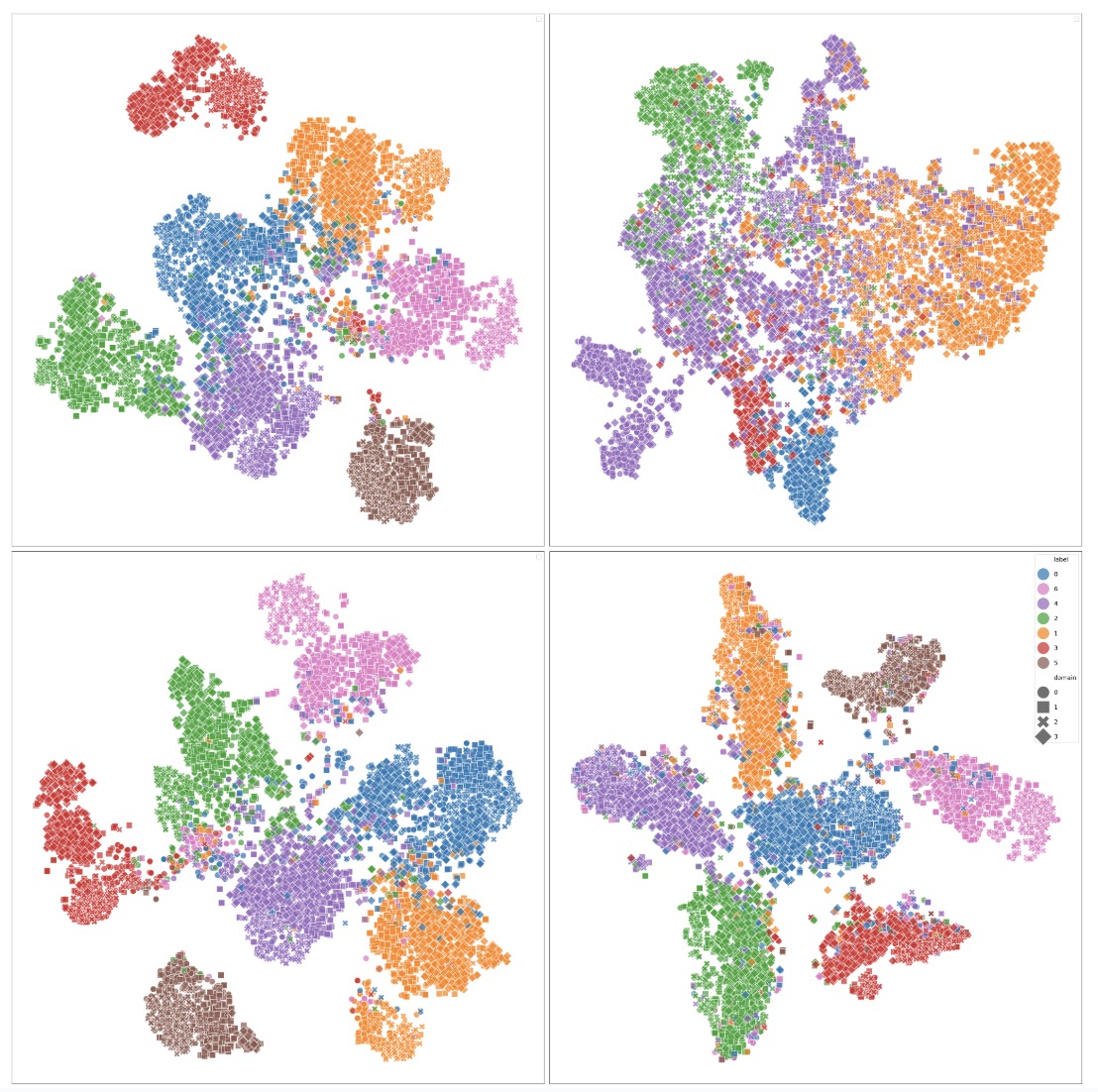}
\caption{
t-SNE embeddings of three methods on the PACS dataset. From left to right, up to down are ERM, IRM, Mixup, and \method.
}
\label{fig:tsne}
\vspace{-10pt}
\end{figure}

\subsection{Convergence Analysis and Feature Clustering}
In addition to the experimental results, we further assess the feasibility of our approach by examining its trainability and convergence properties. Figure~\ref{fig:convergence} illustrates the convergence behavior of our model as the training step size increases. The convergence trajectories, shown under three different random seeds and training configurations across two datasets, demonstrate that \method achieves stable optimization and effectively minimizes the overall loss—even in the presence of label noise. This stability validates our method’s robustness and its ability to learn domain-invariant representations.
Furthermore, Figure~\ref{fig:tsne} presents t-SNE embeddings of features learned by three different methods on the PACS dataset. In these visualizations, each point represents a sample, with color indicating the class and marker shape indicating the domain. By comparing the best models from each approach, it is evident that the proposed \method produces more cohesive and well-separated clusters across both training and test domains, underscoring its superior capability in learning discriminative features.

\subsection{Parameter Sensitivity}
We consider four key hyperparameters in our method: the regularization parameter ($\lambda$), the iteration frequency, the temperature ($\tau$), and the learning rate (lr). For fairness, we keep all other settings fixed while varying one parameter at a time, and we set the random seed for each trial. Figure~\ref{fig:param_sensitivity} presents the parameter sensitivity analysis, with red markers indicating the optimal performance points. The best performance is achieved at $\lambda = 0.1$, though the accuracy remains competitive even when $\lambda$ deviates slightly from this value, demonstrating resilience to parameter shifts. Similarly, increasing the iteration frequency from 5 to 20 results in only minor changes in accuracy, which underscores the method's stability. For the temperature, varying $\tau$ from 5 to 15 shows that the optimal performance is attained at $\tau = 10$, with a sharp decline in accuracy observed when $\tau$ is raised to 15. This suggests that an excessive temperature causes the amplification in the softmax to become overly aggressive, rendering it too selective and potentially discarding useful information; however, when properly tuned, it also helps to effectively diminish the effect of samples that deviate from the semantic anchors. Although Theorem~\ref{thm:weighted} guarantees that a sufficiently high temperature will favor uncorrupted data through the softmax weighting, the key is finding the right balance--high enough to filter out noise without overshooting. 
The proposed method ("Ours") achieves the highest accuracy at $10^{-4}$, as indicated by the red cross mark, but performance drops significantly at higher learning rates. Both ERM and Mixup exhibit more stable performance across lower learning rates but also experience a sharp decline at $10^{-3}$ and beyond.
Overall, these findings demonstrate that our approach exhibits robustness across a broad spectrum of hyperparameter settings, maintaining strong performance even under suboptimal conditions. Furthermore, they suggest that careful hyperparameter tuning can further improve performance, offering a competitive advantage over alternative methods.

\begin{figure}[!t]
\centering
\includegraphics[width=\linewidth]{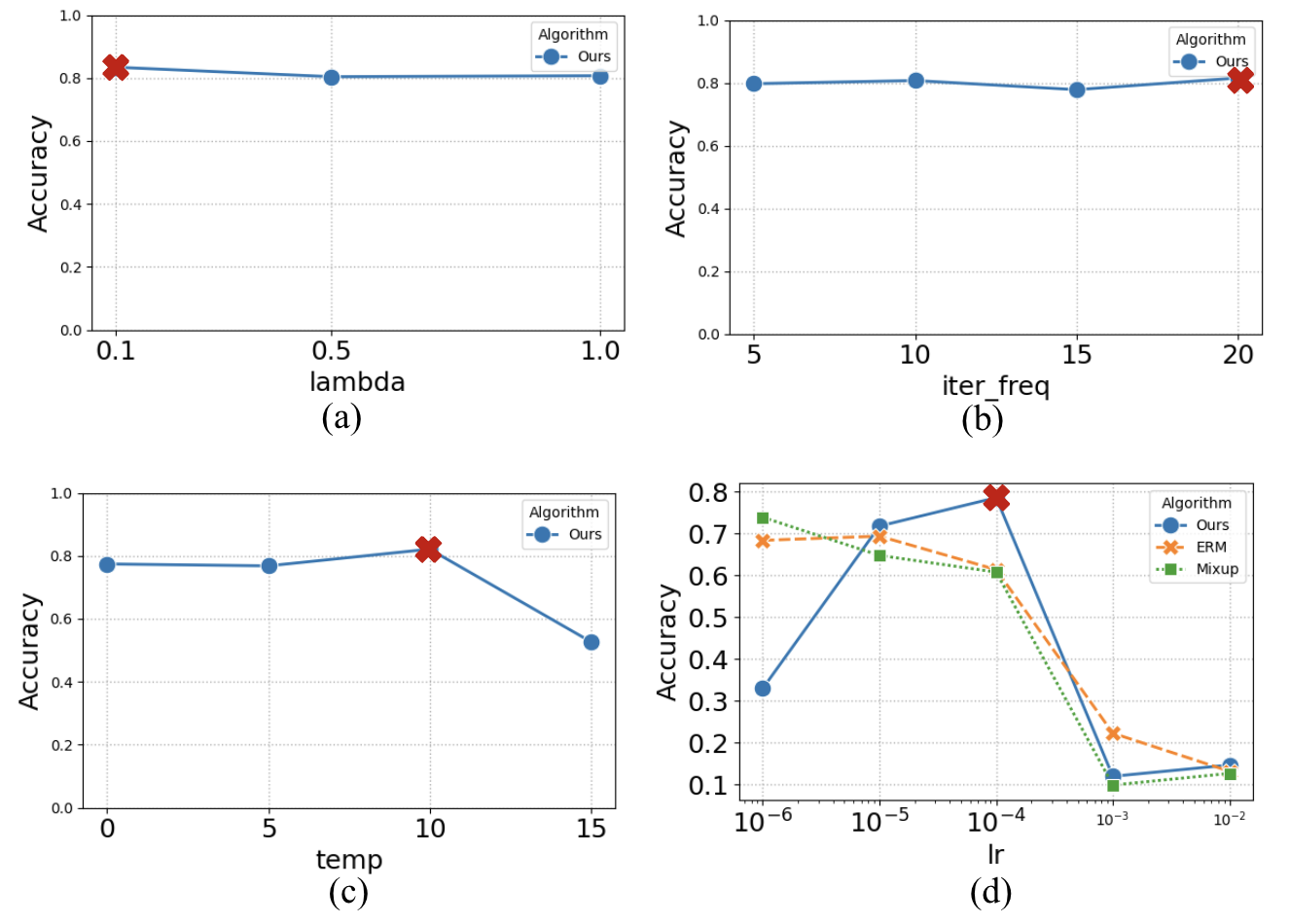}
\vspace{-20pt}
\caption{
Parameter sensitivity analysis.
}
\vspace{-10pt}
\label{fig:param_sensitivity}
\end{figure}

\subsection{Impact of Semantic Richness in NLP Anchors}
Our ablation studies reveal that NLP anchors provide effective guidance, a finding further supported by experimental results on the SVIRO dataset (see Table~\ref{tab:main_results_0.25} and Table~\ref{tab:main_results_0.1}). In particular, when descriptive class names such as “car back seat,” “infant car seat,” and “child in convertible car seat” were used, \method achieved the highest performance improvement among all datasets. In contrast, other datasets used single-word labels showed less pronounced gains.
However, this reliance on rich semantic information may pose challenges when the categories are very similar or less frequently encountered. We ruled out the possibility that the number of classes was a limiting factor—our experiments on the OfficeHome dataset, which includes 65 classes, still demonstrated performance gains over the baselines.
Future work could explore more tailored network architectures for instance-specific domain generalization, especially in scenarios involving closely related classes.

\subsection{Training Noise as an Implicit Regularizer}
In some parameter settings, we observed that introducing more noise during training can actually lead to improved accuracy, as shown in Fig. \ref{fig:noise_analysis}. This counterintuitive result occurs because additional noise acts as a form of regularization. Essentially, it prevents the model from overfitting to spurious correlations and irrelevant details in the training data. Previous work~\cite{chen2024noise} suggests that this injected noise encourages the network to learn more robust and invariant features by smoothing the loss landscape and promoting the discovery of flatter minima. As a result, even though the training data is noisier, the model is better able to capture the underlying patterns that generalize well to the target domain. This leads to improved performance on unseen data, as the network becomes less sensitive to the peculiarities of the noisy training samples.

\subsection{Cosine Similarity Loss for Feature Representation Learning}
Negative cosine similarity is commonly used as a loss function when learning feature representations, especially in tasks such as metric learning, similarity-based learning, and contrastive learning. We compared L2 and cosine similarity, where cosine similarity greatly outperformed L2 loss. Although both encourage the learning of representations that are close together for similar items (i.e. samples from the same class) and far apart for dissimilar ones to support the formation of a more discriminative and robust feature space, cosine similarity captures the orientation of the vectors rather than their magnitude. In other words, cosine similarity is scale invariant: if a vector is scaled by a constant factor, its cosine similarity with another vector remains unchanged, whereas L2 loss is directly affected by changes in magnitude. This property is particularly advantageous in tasks where the relative direction of feature vectors (which captures semantic information) is more important than their absolute values. Additionally, cosine similarity naturally emphasizes the alignment between vectors, encouraging features from the same class to be directionally similar, regardless of their scale. This can lead to more robust clustering and improved generalization, especially when the features may vary in magnitude due to factors unrelated to the underlying semantics.

\section{Discussion}

Despite these promising results, \method has several limitations that can be improved. The computational cost of \method is another aspect worth considering. Compared to baseline DG methods such as ERM or CORAL, our approach introduces additional \emph{mapping layers} for each class and computes cosine similarity loss with softmax-based weighting, increasing both parameter count and training time. 
While our experiments indicate that the additional overhead is manageable (approximately a 10\% increase in runtime), scaling \method{} to large-scale datasets with hundreds of classes could become computationally expensive---particularly when replacing simple modules with larger ones (e.g., cross attention), as evidenced by the significant time cost observed over 240 trials. Potential solutions include optimizing the mapping layers via \emph{low-rank factorization} or \emph{class clustering} to reduce redundancy in the alignment process. Another promising direction for improvement involves \emph{adaptive anchor selection}. Currently, \method{} assumes a fixed set of text embeddings throughout training, which may not always be optimal. Instead of static anchors, one could explore \emph{dynamic anchor refinement}, where text embeddings evolve based on learned feature distributions. This strategy would enable the model to refine its representations over time, thereby improving alignment with the most semantically relevant concepts. Additionally, integrating \emph{contrastive learning} techniques could further strengthen the alignment between visual and textual representations. Future work could also extend \method to \emph{multi-label and hierarchical classification} settings. Many real-world applications involve objects that belong to multiple categories simultaneously (e.g., an image of a wolf could be categorized as both ``canine'' and ``wild animal''). Current \method anchors operate at the class level, but incorporating hierarchical category embeddings (e.g., WordNet-based representations) could improve generalization by capturing higher-level semantic relationships. Similarly, expanding the algorithm to support multi-modal domain generalization—incorporating additional cues such as audio or structured metadata—could further enhance robustness. Finally, another promising extension involves \emph{domain-aware prompts}. Instead of using generic text descriptions (e.g., ``\textit{a photo of a \{class\}}''), dynamically tailoring prompts to reflect domain characteristics (e.g., ``\textit{a sketch of a dog}'' in PACS) could improve alignment. This would enable a more fine-grained adaptation of NLP anchors to specific domain shifts, further mitigating performance degradation in unseen domains.

\section{Conclusion}
\label{sec:conclusion}

In this paper, we introduced \textbf{\method}, a simple yet effective domain generalization algorithm that integrates external knowledge from large-scale language models to guide visual feature learning. By mapping learned representations to class-specific NLP anchors, we impose additional semantic constraints that mitigate domain-specific biases. Experimental results across five benchmark datasets demonstrate consistent improvements over state-of-the-art methods, underscoring the effectiveness of \emph{knowledge-guided} strategies in dealing with unseen domains to the best of our knowledge.

Moreover, our ablation studies reveal that both semantic alignment and adaptive weighting play pivotal roles in enhancing the robustness of our approach. Notably, our algorithm's modular design allows for easy replacement or customization of its individual components—such as integrating different moving average strategies or alternative weighting schemes—to better suit diverse tasks. This flexibility not only simplifies the process of adapting to new challenges but also sets the stage for future explorations in knowledge-guided domain generalization. As vision-language models continue to evolve, the potential to further refine and extend these methods promises even greater strides toward interpretable, robust, and versatile deep learning systems.

\end{document}